\numberwithin{equation}{section}
\newcommand\pen[1]{(#1){}^{+}} 
\renewcommand\i[2]{\iota(#1,#2)}      
\def\pyr{\mathcal{S}}
\let\epsilon\varepsilon
\title[Explicit neural network classifiers for 
non-separable data]
{Explicit neural network classifiers for 
non-separable data}
\author{Patr\'{i}cia Mu\~{n}oz Ewald}
\address[P. M. Ewald]{Department of Mathematics, University of Texas at Austin,
Austin TX 78712, USA} 
\email{ewald@utexas.edu}
\definecolor{burntorange}{HTML}{BF5700}
\definecolor{UTblue}{HTML}{00A9B7}
\definecolor{bluebonnet}{HTML}{005F86}
\begin{document}

\maketitle

\begin{abstract}
    We fully characterize a large class of feedforward neural networks in terms of
    truncation maps. As an application, 
    we show how a ReLU neural network can implement a feature map which separates
    concentric data. 
\end{abstract}


\section{Introduction}

As deep learning models become more powerful and widespread, enhancing their efficiency
and interpretability becomes increasingly important. Neural networks, with their billions
of parameters trained primarily through gradient descent, often function as black-box
models. This opacity makes it challenging to understand how they make decisions. Research
on the mathematical foundations of deep learning aims to shed light on these processes,
helping to interpret and explain the decision-making mechanisms of neural
networks.

In previous work (joint with T. Chen) \cite{chenewald23deep, chenewald24hyperplanes}, we
showed that a feedforward neural network 
\begin{align}
    f_{\underline{\theta}}: 
    \mathbb{R}^{d_0} \stackrel{\underline{\theta}_{1}}{\longrightarrow} \,\,
    \mathbb{R}^{d_1} \stackrel{\underline{\theta}_{2}}{\longrightarrow} \,\,
    \cdots \stackrel{\underline{\theta}_{L-1}}{\longrightarrow} \mathbb{R}^{d_{L-1}}
    \stackrel{\underline{\theta}_{L}}{\longrightarrow} 
    \mathbb{R}^{d_{L}}
\end{align}
can be rewritten 
in terms of transformations $\tau$ of input space, or truncation maps,
\begin{align}
    f_{\underline{\theta}}:
    \mathbb{R}^{d_0}
    \stackrel{\tau^{(1)}}{\longrightarrow} \,\, \mathbb{R}^{d_0}
    \stackrel{\tau^{(2)}}{\longrightarrow} \cdots 
    \stackrel{\tau^{(L-1)}}{\longrightarrow} \mathbb{R}^{d_0}
    \longrightarrow
    \mathbb{R}^{d_{L}}.
\end{align}
This characterization makes it possible to observe how data transforms as it moves along the layers
of a neural network.
In the case of ReLU activation function, a description of the truncation map in terms of
cones yielded a geometric construction of networks which interpolate data for
multiclass classification tasks, with greatly reduced number of parameters.

In this work, we continue to develop the truncation map as a tool for investigating the
mathematical properties of neural networks. The main technical limitations of previous work were
that the width was bounded by the dimension of the input space, and that non-linear
decision boundaries for binary classification could not be achieved by the
characterization of the truncation map for ReLU. 
Our contributions are as follows:
\begin{itemize}
    \item We extend the description of feedforward neural networks in terms of truncation
        maps to the case of width $>$ input dimension, and completely characterize 
        truncation maps for ReLU activation (Section \ref{sec:tools}). 
    \item We apply these results to explicitly construct networks which interpolate data in
        2 model cases 
        (see Figure \ref{fig:cones}) 
        for binary classification data which is not linearly separable (Section
        \ref{sec:main}).
\end{itemize}

The result for concentric data (Figure \ref{fig:concentric}) 
is particularly interesting:
It realizes the first layer of a ReLU
neural network as a piecewise linear version of a feature map $\phi(x) = (x,
\norm{x}^{2})$ that a support vector machine might implement \cite{boseretal92}.
Moreover, it
illustrates the fact that  
ReLU neural networks with width at least (input dimension)$ + 1$ are universal
for real functions \cite{hanin19universal, haninsellke17}.
This distribution of data is also one of the examples considered in 
\cite{johnson18}, which proved that functions with a level set containing
a bounded component cannot be approximated by ReLU neural networks with width bounded
above by the input dimension.

\subsection*{Related work}
We make a few remarks on related work beyond what was mentioned in the previous paragraph. 
First, there are other works with explicit constructions of
ReLU neural networks. 
Notably,
\cite{cooper21} gave an explicit construction of a shallow network
which interpolates the data. 
For works characterizing critical points of loss
functions for ReLU networks with fixed architecture, we refer the reader to the
introduction of \cite{chenewald24hyperplanes}.

Second, 
we note that the reinterpretation of feedforward ReLU neural networks in terms of
truncation maps was independently discovered and studied in \cite{vallinetal23}. We adopt
the nomenclature
``polyhedral cones'' to match theirs, but otherwise match notation with our previous work
\cite{chenewald23deep, chenewald24hyperplanes}.

Finally, we observe that the maps $H_{j}$ defined in \cite{haninsellke17} are
truncation maps, but the affine maps considered are very restricted, and the focus of
that work was not on the geometric structure of those maps.

\begin{figure}
    \centering
    \begin{subfigure}{0.45\textwidth}
        \def\svgwidth{1\columnwidth}
        \raisebox{15pt}{
\begingroup%
  \makeatletter%
  \providecommand\color[2][]{%
    \errmessage{(Inkscape) Color is used for the text in Inkscape, but the package 'color.sty' is not loaded}%
    \renewcommand\color[2][]{}%
  }%
  \providecommand\transparent[1]{%
    \errmessage{(Inkscape) Transparency is used (non-zero) for the text in Inkscape, but the package 'transparent.sty' is not loaded}%
    \renewcommand\transparent[1]{}%
  }%
  \providecommand\rotatebox[2]{#2}%
  \newcommand*\fsize{\dimexpr\f@size pt\relax}%
  \newcommand*\lineheight[1]{\fontsize{\fsize}{#1\fsize}\selectfont}%
  \ifx\svgwidth\undefined%
    \setlength{\unitlength}{195.7015823bp}%
    \ifx\svgscale\undefined%
      \relax%
    \else%
      \setlength{\unitlength}{\unitlength * \real{\svgscale}}%
    \fi%
  \else%
    \setlength{\unitlength}{\svgwidth}%
  \fi%
  \global\let\svgwidth\undefined%
  \global\let\svgscale\undefined%
  \makeatother%
  \begin{picture}(1,0.52769888)%
    \lineheight{1}%
    \setlength\tabcolsep{0pt}%
    \put(0,0){\includegraphics[width=\unitlength,page=1]{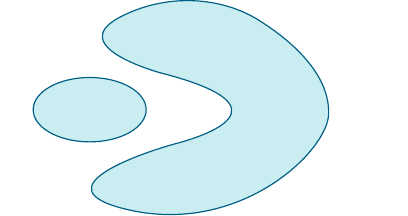}}%
    \put(0.16449436,0.23389598){\color[rgb]{0,0.37254902,0.5254902}\transparent{0.86111099}\makebox(0,0)[lt]{\lineheight{1.25}\smash{\begin{tabular}[t]{l}$\mathcal{X}_{2}$\end{tabular}}}}%
    \put(0.43565739,0.44330011){\color[rgb]{0,0.37254902,0.5254902}\transparent{0.86111099}\makebox(0,0)[lt]{\lineheight{1.25}\smash{\begin{tabular}[t]{l}$\mathcal{X}_{1}$\end{tabular}}}}%
    \put(0,0){\includegraphics[width=\unitlength,page=2]{crescent3.pdf}}%
    \put(0.62460212,0.28809135){\color[rgb]{0.74901961,0.34117647,0}\transparent{0.85632664}\makebox(0,0)[lt]{\lineheight{1.25}\smash{\begin{tabular}[t]{l}$v_1$\end{tabular}}}}%
    \put(0.54792672,0.14307061){\color[rgb]{0.74901961,0.34117647,0}\transparent{0.85632664}\makebox(0,0)[lt]{\lineheight{1.25}\smash{\begin{tabular}[t]{l}$v_2$\end{tabular}}}}%
    \put(0.4429463,0.20415164){\color[rgb]{0.74901961,0.34117647,0}\transparent{0.85632664}\makebox(0,0)[lt]{\lineheight{1.25}\smash{\begin{tabular}[t]{l}$p$\end{tabular}}}}%
  \end{picture}%
\endgroup%
}
        \caption{}
        \label{fig:crescent}
    \end{subfigure}
    \hspace{1em}
    \begin{subfigure}{0.3\textwidth}
        \def\svgwidth{1\columnwidth}
        \import{./images/}{concentric-copy.pdf_tex}
        \caption{}
        \label{fig:concentric}
    \end{subfigure}
    \caption{Simplified representations of the models of data distribution we consider
        here, and the polyhedral cones constructed such that the truncated data becomes
        linearly separable. Figure \ref{fig:crescent} corresponds to Proposition
        \ref{prop-crescent}, and Figure \ref{fig:concentric} corresponds to Corollary
        \ref{prop:kernel-trick}.}
    \label{fig:cones}
\end{figure}


\section{Setting and technical tools}
\label{sec:tools}

Consider neural networks of the form
\begin{align} \label{neuralnet}
    x^{(0)} &= x_{0} \in \mathbb{R}^{d_0} \,\, \text{ initial input,} \nonumber \\ 
    x^{(\ell )} &= \sigma(W_{\ell } x^{(\ell -1)} + b_{\ell }) \in \mathbb{R}^{d_{\ell}},
    \text{ for } \ell =1, \cdots, L-1, \\ 
    x^{(L)} &= W_{L}x^{(L-1)}+b_{L} \in \mathbb{R}^{d_{L}}, \nonumber
\end{align}
where $W_{\ell} \in \mathbb{R}^{d_{\ell }\times d_{\ell -1}}$ for $\ell =1, \cdots, L$
are weight matrices, $b_{\ell } \in \mathbb{R}^{d_{\ell }}$ for $\ell=1, \cdots, L$ are
bias vectors, and $\sigma$ is an activation function which acts component-wise on vectors.
In the present work, we will focus on the ramp function (ReLU),
\begin{align}
    \sigma(x)_{i} := (x_{i})_{+} = \max(x_{i},0).
\end{align}

Given $W\in \mathbb{R}^{m \times n}$ and $ b\in \mathbb{R}^{m}$, consider the
family of truncation maps
\begin{align}
     \tau_{W,b}:\mathbb{R}^{n} &\to \mathbb{R}^{n} \nonumber \\
     x &\mapsto \pen{W}\left(\sigma (Wx + b) - b\right),
\end{align}
where $\pen{W}$ is the generalized inverse of $W$.
Define the cumulative parameters
\begin{alignat}{2}
    \label{W(ell)}
    W^{(1)} &:= W_1, \quad W^{(\ell )} &&:= W_{\ell } \cdots W_1 = W_{\ell } W^{(\ell
    -1)},  \nonumber \\ 
        b^{(1)} &:= b_{1}, \quad\quad b^{(\ell )} &&:= W_{\ell} b^{(\ell -1)} + b_{\ell},
\end{alignat}
associated to a network \eqref{neuralnet},
and for any input $x = x^{(0)}$, let $x^{(\tau, \ell )}$ correspond to the output of
$\ell$ many truncation maps applied to $x$, 
\begin{align}
    x^{(\tau, 0)} := x^{(0)}, \quad
    x^{(\tau,\ell )}
        &:= \tau_{W^{(\ell)},b^{(\ell)}} \, (x^{(\tau ,\ell -1)}), \quad \ell=1, \cdots,
        L-1.
\end{align}

In \cite{chenewald23deep, chenewald24hyperplanes}, the author and a collaborator showed
that a neural network can be described in terms of truncation maps and cumulative
parameters:
if all weight matrices $W_{\ell}$ are surjective, $\ell =1,\cdots, L$, then
\begin{align}
    \label{taurecursive}
    x^{(\ell)} &= W^{(\ell)}x^{(\tau, \,\ell)} + b^{(\ell)},
\end{align}
for $\ell =1, \cdots, L-1$, and
\begin{align}
    \label{taulast}
    x^{(L)} &= W^{(L)} x^{(\tau,\, L-1)} + b^{(L)}.
\end{align}

\vspace{1em}
\subsection{Polyhedral cones}
In \cite{chenewald24hyperplanes}, the action
of a truncation map $\tau$ for a ReLU neural network was given in terms of cones in input
space, and $\tau$ acts either as the identity or as a map to a single point on two
distinguished regions in input space. 
This description was used to construct zero loss classifiers for linearly separable data.
For data that is not linearly separable, this is not sufficient.


The following lemma completely describes the action of the truncation map with ReLU
activation function for surjective weight matrices. 

\begin{lemma}
    \label{new-cone-lemma}
    Given $W:\mathbb{R}^{n}\to \mathbb{R}^{m}$ a surjective matrix and $b\in
    \mathbb{R}^{m}$, there exist $p \in \mathbb{R}^{n}$ and $(v_{i})_{i=1}^{m} \subset
    \mathbb{R}^{n}$ such that,
    for 
    \begin{align} 
        \label{xdecomp}
        x = p + \tilde{x}  + \sum_{i=1}^{m} a_{i} v_{i}, 
    \end{align} 
    where $\tilde{x} \in \ker W$, then
    \begin{align}
        \label{tau}
        \tau_{W,b}(x) &= \pen{W} W p + \sum_{i; a_{i} > 0} a_{i} v_{i} \nonumber \\
                      &= \pen{W}W x - \sum_{j;a_{j} \leq 0} a_{j} v_{j}. 
    \end{align}
    Conversely, given $p \in \mathbb{R}^{n}$ and a linearly independent set
    $(v_{i})_{i=1}^{m} \subset \mathbb{R}^{n}$, there exist $W \in \mathbb{R}^{m\times n}$
    and $b\in \mathbb{R}^{m}$ that realize \eqref{tau} for $x \in \mathbb{R}^{n}$
    decomposed as
    \begin{align} 
        \label{xdecomp2}
        x = p + \tilde{v}  + \sum_{i=1}^{m} a_{i} v_{i}, 
    \end{align} 
    where $\tilde{v} \in (span(v_{i})_{i=1}^{m})^{\perp}$.
\end{lemma}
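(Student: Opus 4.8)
The plan is to guess the vectors $p$ and $(v_i)_{i=1}^{m}$ directly from the shape of the formula \eqref{tau} and then verify everything by a single computation, relying on the standard identities for the generalized inverse of a surjective matrix: since $W$ is surjective, $W\pen{W}=I_m$, the product $\pen{W}W$ is the orthogonal projection onto the row space $(\ker W)^{\perp}$, and $\pen{W}W\pen{W}=\pen{W}$. For the forward direction I would take $v_i:=\pen{W}e_i$ (the $i$-th column of $\pen{W}$) and choose $p$ to solve $Wp=-b$, which exists precisely because $W$ is surjective. I would first check that $(v_i)_{i=1}^{m}$ is a basis of $(\ker W)^{\perp}=\mathrm{range}\,\pen{W}$ (it is $m$ vectors spanning an $m$-dimensional space), so that together with $\ker W$ it yields the decomposition \eqref{xdecomp} of every $x$, with coefficients $a_i$ uniquely determined.

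The heart of the argument is to apply $W$ to \eqref{xdecomp}. Using $W\tilde x=0$, $Wv_i=W\pen{W}e_i=e_i$, and $Wp=-b$, one obtains $Wx+b=\sum_i a_i e_i$, i.e. the $i$-th preactivation equals $a_i$. Hence $\sigma(Wx+b)=\sum_i\max(a_i,0)\,e_i$, and applying $\pen{W}$ to $\sigma(Wx+b)-b$ gives
\[
\tau_{W,b}(x)=\sum_{i:a_i>0}a_i v_i-\pen{W}b=\pen{W}Wp+\sum_{i:a_i>0}a_i v_i,
\]
where the last step uses $-\pen{W}b=\pen{W}(Wp)=\pen{W}Wp$. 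This is the first line of \eqref{tau}. The second line then follows from $\pen{W}Wx=\pen{W}Wp+\sum_i a_i v_i$ (because $\pen{W}W\tilde x=0$ and $\pen{W}Wv_i=\pen{W}W\pen{W}e_i=v_i$) by subtracting the non-positive terms; note that since $\max(a_i,0)=a_i$ when $a_i=0$, the split between $a_i>0$ and $a_i\le 0$ is harmless and the two forms agree.

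For the converse, given $p$ and a linearly independent family $(v_i)_{i=1}^{m}$, I would set $V:=[\,v_1\ \cdots\ v_m\,]\in\mathbb{R}^{n\times m}$, which has full column rank, and take $W:=\pen{V}=(V^{T}V)^{-1}V^{T}$ together with $b:=-Wp$. Then $W$ is surjective with $\ker W=(\mathrm{range}\,V)^{\perp}=(\mathrm{span}(v_i)_{i=1}^{m})^{\perp}$, one has $Wv_i=e_i$ since $WV=I_m$, and by the involution property $\pen{\pen{V}}=V$ one gets $\pen{W}e_i=v_i$; moreover $Wp=-b$ by construction. Thus the hypotheses of the forward direction hold, with $\ker W$ equal to $(\mathrm{span}(v_i)_{i=1}^{m})^{\perp}$, so \eqref{tau} follows verbatim for the decomposition \eqref{xdecomp2}. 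I expect the only delicate points to be bookkeeping with the Moore--Penrose identities and the observation that the space used in the decomposition shifts from $\ker W$ in \eqref{xdecomp} to $(\mathrm{span}(v_i))^{\perp}$ in \eqref{xdecomp2}; the content of the lemma is that the matrices constructed in the converse make these two subspaces coincide, so no separate verification of \eqref{tau} is needed.
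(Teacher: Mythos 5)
Your proposal is correct and follows essentially the same route as the paper: take $v_i := \pen{W}e_i$ and $p$ with $Wp=-b$ (the paper uses $p:=-\pen{W}b$), apply $W$ to the decomposition to get $Wx+b=\sum_i a_i e_i$, and for the converse build $W$ so that $\pen{W}=[\,v_1\ \cdots\ v_m\,]$ and $b:=-Wp$. Your treatment is if anything slightly more careful than the paper's, e.g.\ in verifying that $(v_i)$ is a basis of $(\ker W)^{\perp}$ and in spelling out why $\ker W=(\mathrm{span}(v_i)_{i=1}^{m})^{\perp}$ makes the converse reduce verbatim to the forward computation.
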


\begin{proof}
Let $W:\mathbb{R}^{n} \to \mathbb{R}^{m}$ be a surjective linear map.
Recall
that $W\pen{W} = \Id_{m\times m}$ and  $\pen{W}W$ is the orthogonal projector to $(\ker
W)^{\perp}$.  Define
\begin{align}
    v_{i} := \pen{W}(e_{i}^{m}), \,\, i=1, \cdots, m,
\end{align}
and note that $W(v_{i}) = W\pen{W} e_{i}^{m} = e_{i}^{m}$, and $\pen{W}W v_{i} = 
\pen{W}e_{i}^{m} = v_{i}$.  Then fixing $p\in \mathbb{R}^{n}$, for any $x\in \mathbb{R}^{n}$
we have the decomposition
\begin{align} 
    x = p + \tilde{x}  + \sum_{i=1}^{m} a_{i} v_{i}, 
\end{align} 
where $\tilde{x} \in \ker W$. Let $b = -Wp$. Then
\begin{align}
    \tau_{W,b}(x) &= \pen{W} \left( \sigma (Wx + b) - b \right) \nonumber \\
                  &= \pen{W} \left(\sigma \left( Wp + W\tilde{x} + \sum_{i=1}^{m} a_{i}
                  W(v_{i}) - Wp \right) - b\right) \nonumber \\
                  &= \pen{W} \left( \sigma \left(\sum_{i=1}^{m} a_{i} e_{i} \right) + Wp
                  \right) \\
                  &= \pen{W} W p + \sum_{i; a_{i} > 0} a_{i} v_{i} \nonumber \\
                  &= \pen{W}W x - \sum_{j;a_{j} \leq 0} a_{j} v_{j}. \nonumber
\end{align}

Note that given $W, b$, where $W$ is surjective, we can find $p := -\pen{W}b$ and $v_{i}
:= \pen{W} e_{i}^{m} \in \mathbb{R}^{n}$, for $i=1, \cdots, m$. 
Conversely, given $p \in \mathbb{R}^{n}$ and $(v_{i})_{i=1}^{m} \subset \mathbb{R}^{n}$
linearly independent, we can define $\pen{W} :=
[ v_1 \cdots v_{n}]$, which is injective and so $W$ is surjective, and $b := - Wp$.
\end{proof}

\begin{coro}
    \label{coro:proj}
    For fixed surjective $W \in \mathbb{R}^{m\times n}$ and $b \in \mathbb{R}^{m}$, the
    ReLU truncation map $\tau_{W,b}$ is idempotent: $\tau_{W,b} \circ \tau_{W,b} =
    \tau_{W,b}.$ 
\end{coro}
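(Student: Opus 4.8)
The plan is to invoke the explicit formula \eqref{tau} from Lemma \ref{new-cone-lemma} twice, exploiting the fact that a single application of $\tau_{W,b}$ already sends $x$ into the region on which $\tau_{W,b}$ acts as the identity. Throughout I use the data supplied by that lemma: with $p := -\pen{W}b$ and $v_i := \pen{W}e_i^m$ one has $\tau_{W,b}(x) = \pen{W}Wp + \sum_{i;\,a_i>0} a_i v_i$, where $x = p + \tilde{x} + \sum_i a_i v_i$ and $\tilde{x} \in \ker W$.

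First I would record that $(v_i)_{i=1}^m$ is a basis of $(\ker W)^{\perp}$: the vectors are linearly independent because $\pen{W}$ is injective, and there are $m = \mathrm{rank}\,W = \dim (\ker W)^{\perp}$ of them. Consequently the decomposition of an arbitrary vector as $p + (\text{element of }\ker W) + (\text{combination of the }v_i)$ is unique, which is what lets me read off coefficients unambiguously.

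The heart of the argument is to re-expand $y := \tau_{W,b}(x)$ about the same base point $p$. Writing $y - p = (\pen{W}Wp - p) + \sum_{i;\,a_i>0} a_i v_i$, the key identity is $\pen{W}Wp - p = -(\Id - \pen{W}W)p \in \ker W$, since $\Id - \pen{W}W$ is the orthogonal projector onto $\ker W$ (equivalently, $\pen{W}W$ projects onto $(\ker W)^{\perp}$). Hence in the decomposition \eqref{xdecomp} of $y$ the kernel component is $\tilde{y} = -(\Id - \pen{W}W)p$ and the $v_i$-coefficients are exactly $a_i^{+} := \max(a_i,0)$.

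Finally I apply \eqref{tau} to $y$: the set of indices with strictly positive coefficient is again $\{i : a_i > 0\}$, and on that set the coefficient $a_i^{+}$ equals $a_i$, so $\tau_{W,b}(y) = \pen{W}Wp + \sum_{i;\,a_i>0} a_i v_i = \tau_{W,b}(x)$, yielding idempotence. The only point requiring care — and the step I expect to be the main obstacle — is the re-expansion: one must verify that passing from $x$ to $y$ does not move the reference point $p$ and that the leftover $\pen{W}Wp - p$ falls entirely into $\ker W$ rather than perturbing the $v_i$-coefficients; the projector identity above is precisely what guarantees this. Geometrically, this just records that $\tau_{W,b}$ maps everything into the cone where it restricts to the identity.
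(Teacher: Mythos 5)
Your proposal is correct and follows essentially the same route as the paper: both rewrite $\tau_{W,b}(x)$ via the projector identity $\pen{W}W = \Id - \mathcal{P}_{\ker W}$, recognize $\pen{W}Wp + \sum_{i;\,a_i>0} a_i v_i$ as a decomposition of the form \eqref{xdecomp} with kernel part $\pen{W}Wp - p \in \ker W$ and nonnegative coefficients, and then apply \eqref{tau} a second time. The only difference is that you spell out the uniqueness of the decomposition and the basis property of the $(v_i)$, which the paper leaves implicit.
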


\begin{proof}
    For 
    \begin{align}
        x = p + \tilde{x}  + \sum_{i=1}^{m} a_{i} v_{i}
    \end{align}
    as in \eqref{xdecomp}, we have
    \begin{align}
        \tau_{W,b}(x) &= \pen{W} W p + \sum_{i; a_{i} > 0} a_{i} v_{i} \nonumber \\ 
                      &= p - \mathcal{P}_{\ker W} (p) + \sum_{i; a_{i} > 0}
                      a_{i} v_{i} ,
    \end{align}
    since $\pen{W}W = \mathcal{P}_{(\ker W)^{\perp}} = \left(\Id - \mathcal{P}_{\ker
    W}\right)$. Applying \eqref{tau} again,
    \begin{align}
        \tau_{W,b}\circ \tau_{W,b}(x) = \pen{W} W p + \sum_{i; a_{i} > 0} a_{i} v_{i}.
    \end{align}
\end{proof}

\begin{remark}
    We see from \eqref{tau} that two points $x_1, x_2$ for which $\{i, a_{i}(x_1) > 0\}= \{i,
    a_{i}(x_2) > 0\} =: I$ and $a_{i}(x_1) = a_{i}(x_2)$ for all such $i\in I$ will be
    truncated to the same point.
\end{remark}

Consider $W \in GL(n)$ and $b\in \mathbb{R}^{n}$. Then $p \in \mathbb{R}^{n}$
and $\underline{v}:=(v_1, \cdots, v_{n})$ given by Lemma \ref{new-cone-lemma}
define the polyhedral cones 
\begin{align} 
    \label{S+}
    \pyr_{+}(p,\underline{v}) := \left\{p + \sum_{i=1}^{n} a_{i} v_{i} : a_{i} \geq 0,
    i=1, \cdots, n \right\} 
\end{align}
and 
\begin{align}
    \label{S-}
    \pyr_{-}(p,\underline{v}) := \left\{p +  \sum_{i=1}^{n} a_{i} v_{i} : a_{i} \leq  0,
    i=1, \cdots, n \right\},
\end{align}
and their union $\mathcal{S} := \pyr_{+} \cup \pyr_{-}$ (see Figure \ref{fig:poly}). We
will make a few remarks and definitions which will be useful later.

\begin{figure}[h]
    \centering
    \def\svgwidth{0.45\columnwidth}
    \import{./images/}{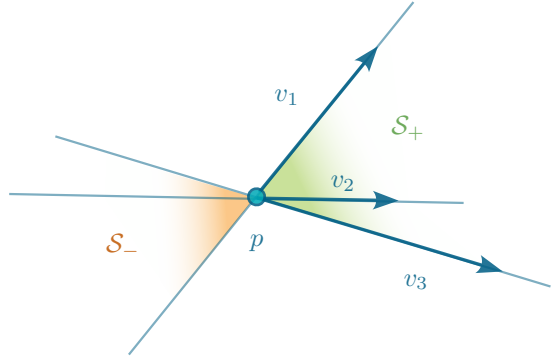}
    \caption{For the ReLU activation function, how a truncation map $\tau $ acts on the data
    depends on the positioning of the data with respect to the associated polyhedral cone
    $\pyr = \pyr_{+}\cup \, \pyr_{-}$, characterized by a base point $p$ and edges given
by a set of linearly independent vectors $v_{i}$, $i=1, \cdots, m$ (in the picture,
$m=3$).}
    \label{fig:poly}
\end{figure}

First, we can restate Lemma \ref{new-cone-lemma} in different terms: 
If $x\in \pyr_{+}$ then $\tau _{W,b}(x) = x$, and if $x\in \pyr_{-}$ then
$\tau_{W,b}(x) = p.$ 
Moreover, $\tau_{W,b}(x) \in \pyr_{+}$ for all $x\in \mathbb{R}^{n}$,
and any point in the complement $ \pyr^{c}$ gets projected
to the boundary of $\pyr_{+}$. 
In this way, the ReLU truncation maps are easily seen to be (non-linear)
projections (Corollary \ref{coro:proj}).

Next, the boundary of 
$\pyr_{+} \subset \mathbb{R}^{n}$ can be decomposed into faces, similarly to a simplex,
characterized by $a_{k} = 0$, $k=1, \cdots, n$.
Each one of these $(n-1)$-faces spans a hyperplane in $\mathbb{R}^{n}$,
\begin{align} 
    \label{Hk}
    H_{k}(p, \underline{v}) := \left\{ p + \sum_{i=1, i \neq k}^{n} a_{i} v_{i}: a_{i}\in
        \mathbb{R} \right\}.
\end{align}

Finally, if $\underline{v}=(v_1, \cdots, v_{m})$ contains only $m<n$ linearly independent
vectors in $\mathbb{R}^{n}$, as may be the case if $W\in \mathbb{R}^{m \times n}$ in Lemma
\ref{new-cone-lemma}, then we let
\begin{align} 
    \pyr_{+}(p,\underline{v}) := \left\{p + \sum_{i=1}^{m} a_{i} v_{i} + \tilde{v}: a_{i}
    \geq 0, i=1, \cdots, m, \, \tilde{v} \in (span(v_{i})_{i=1}^{m})^{\perp}\right\},
\end{align}
and similarly for $\pyr_{-}$. In this case, $\mathcal{S}(p, \underline{v})$ can be thought
of as a family of polyhedral cones $\mathcal{S}(p+\tilde{v}, \underline{v})$ defined as
in \eqref{S+}
on $m$-dimensional affine subspaces
\begin{align}
    H_{\tilde{v}} = \left\{ \tilde{v} + \sum_{i=1}^{m} a_{i} v_{i}: a_{i} \in
    \mathbb{R}\right\} \subset \mathbb{R}^{n},
\end{align}
indexed by $\tilde{v} \in (span(v_{i})_{i=1}^{m})^{\perp}$, or $\tilde{v} \in \ker(W)$  in
Lemma \ref{new-cone-lemma}.

\vspace{1em}
\subsection{Increased width}
Next, we study what happens if $W$ is not surjective. We leave the case where
$W$ is not full-rank for future work, and consider here the situation where $W\in
\mathbb{R}^{m\times n}$ for $n < m$, and $W$ is
injective.

Define
\begin{align} 
    \i nm&:\mathbb{R}^{n} \to \mathbb{R}^{m},\quad n\leq m, \nonumber\\
    \i nm(e^{n}_{i}) &=
        e^{m}_{i}, \quad i = 1, \cdots, n,
\end{align}
where $\left\{e_{i}^{n}\right\}_{i=1}^{n}$ is the standard basis for $\mathbb{R}^{n}$,
so that $\i nm \in \mathbb{R}^{m \times n}$ is a block matrix with upper block
$\Id_{n\times n}$ and lower block made of zeros. 

\begin{lemma}
    \label{increase-lemma}
    Let $W:\mathbb{R}^{n} \to \mathbb{R}^{m}$ be a linear map, for $n<m$. There exists
    $\tilde{W} \in \mathbb{R}^{m\times m}$ such that $W = \tilde{W} \i n m$.
    If $W$ is injective, then $\tilde{W}$ can be made invertible. In that case,
    \begin{align}
        \label{tau-up}
        \sigma(Wx + b) = \tilde{W} \tau_{\tilde{W}, b} (\tilde{x}) + b,
    \end{align}
    for $\tilde{x}  = \i nm x \in \mathbb{R}^{m}$.
\end{lemma}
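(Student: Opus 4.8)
The plan is to realize $\tilde{W}$ as a column extension of $W$ and then reduce the claimed identity to the defining formula for the truncation map together with the fact that the generalized inverse of an invertible matrix is its ordinary inverse.

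First I would construct $\tilde{W}$. Since $\i nm$ is the inclusion sending $e_i^n$ to $e_i^m$, it embeds $\mathbb{R}^n$ as the first $n$ coordinates of $\mathbb{R}^m$. Writing $\tilde{W}\in\mathbb{R}^{m\times m}$ in block-column form $\tilde{W} = [\,W \mid C\,]$, where $C\in\mathbb{R}^{m\times(m-n)}$ is arbitrary, gives $\tilde{W}\i nm = W$ directly, since $\i nm$ selects exactly the first $n$ columns. This proves existence. When $W$ is injective its $n$ columns are linearly independent and span an $n$-dimensional subspace of $\mathbb{R}^m$; extending this set to a basis of $\mathbb{R}^m$ by $m-n$ further vectors and taking these as the columns of $C$ makes $\tilde{W}$ invertible. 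This is the only place where injectivity is used, and it is a standard basis-completion argument.

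Next I would verify the identity. Since $\tilde{W}$ is invertible, its generalized inverse is its inverse, $\pen{\tilde{W}} = \tilde{W}^{-1}$, so the truncation map becomes $\tau_{\tilde{W},b}(\tilde{x}) = \tilde{W}^{-1}\bigl(\sigma(\tilde{W}\tilde{x}+b) - b\bigr)$. Applying $\tilde{W}$ and adding $b$ telescopes:
\begin{align}
    \tilde{W}\tau_{\tilde{W},b}(\tilde{x}) + b
      &= \tilde{W}\tilde{W}^{-1}\bigl(\sigma(\tilde{W}\tilde{x}+b)-b\bigr) + b \nonumber \\
      &= \sigma(\tilde{W}\tilde{x}+b).
\end{align}
Finally, $\tilde{W}\tilde{x} = \tilde{W}\i nm x = Wx$ by the defining relation for $\tilde{W}$, so the right-hand side equals $\sigma(Wx+b)$, as claimed.

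There is no serious obstacle here: the content is entirely in choosing the extension $C$ so that $\tilde{W}$ is invertible, which is possible precisely because $W$ is injective. I would only take care to note that invertibility of $\tilde{W}$ guarantees it is surjective, so that the truncation map $\tau_{\tilde{W},b}$ is well-defined and the relation $\tilde{W}\pen{\tilde{W}} = \Id$ used above is covered by the surjective case established in Lemma \ref{new-cone-lemma}.
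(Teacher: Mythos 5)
Your proposal is correct and follows essentially the same route as the paper: you build $\tilde{W}$ by appending columns to $W$ (using basis completion when $W$ is injective to get invertibility), and then verify \eqref{tau-up} by the same one-line computation, with $\pen{\tilde{W}} = \tilde{W}^{-1}$ playing the role of the identity $\tilde{W}\pen{\tilde{W}} = \Id$ used in the paper. No gaps.
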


\begin{proof}
    First, we construct $\tilde{W}$.
    Suppose  $w_{i} \in \mathbb{R}^{m}$, $i=1, \cdots, n$, are the columns of $W$,
    \begin{align}
        W = \begin{bmatrix}
            w_1 & \cdots & w_{n}
        \end{bmatrix}.
    \end{align}
    Then let
    \begin{align}
        \tilde{W} := \begin{bmatrix}
            w_1 & \cdots & w_{n} & \tilde{w}_{1} & \cdots & \tilde{w}_{m-n} 
        \end{bmatrix},
    \end{align}
    so $\tilde{W}(e_{i}^{m}) = w_{i}$, for $i=1, \cdots, n$, and
    $\tilde{W}(e_{n+j}^{m}) = \tilde{w}_{j}$ for $j=1, \cdots, m-n$, for an arbitrary
    collection of vectors
    $\{\tilde{w}_{j}\}_{j=1}^{m-n} \subset \mathbb{R}^{m}$.
    We check
    \begin{align}
        \tilde{W} \i nm (e_{i}^{n}) = \tilde{W}(e_{i}^{m}) = w_{i} = W(e_{i}^{n}),
    \end{align}
    so $W = \tilde{W} \i nm$.
    Moreover, if $W$ is injective, then $\left\{\tilde{w}_{j}\right\}_{j=1}^{m-n}$ can be
    chosen so that the columns of $\tilde{W}$ are linearly independent, and $\tilde{W}\in
    GL(n).$

    Finally, we check \eqref{tau-up}:
    \begin{align}
        \tilde{W} \tau_{\tilde{W}, b} (\i nm x) + b &= \tilde{W} \pen{\tilde{W}}
        \left(\sigma(\tilde{W} \i nm x + b) - b \right) + b \nonumber \\
                                                 &= \sigma\left( Wx + b\right).
    \end{align}
\end{proof}

In other words, an increase in width can be split into a linear layer map which includes
into a higher-dimensional 
space $\iota: \mathbb{R}^{n} \hookrightarrow \mathbb{R}^{m}$, followed by a non-linear
map between layers with the same dimension.

We can now rewrite a neural network for which $d_0 < d_1$ and
$d_1 \geq d_2 \geq \cdots \geq d_{L}$ in terms of truncation maps:
Consider such a neural network,
\begin{align}
    \label{net1}
    \mathbb{R}^{d_0} \stackrel{\underline{\theta}_{1}}{\longrightarrow} \,\,
    \mathbb{R}^{d_1} \stackrel{\underline{\theta}_{2}}{\longrightarrow} \,\,
    \mathbb{R}^{d_2} \stackrel{\underline{\theta}_{3}}{\longrightarrow} \cdots
    \stackrel{\underline{\theta}_{L}}{\longrightarrow} \mathbb{R}^{d_{L}},
\end{align}
defined as in \eqref{neuralnet},
with $L$ layers and parameters $\underline{\theta} = (
\underline{\theta}_{1}, \cdots, \underline{\theta}_{L}$), where $\underline{\theta}_{\ell }
= (W_{\ell }, b_{\ell }) \in \mathbb{R}^{d_{\ell } \times d_{\ell-1}} \times
\mathbb{R}^{d_{\ell }}$ for $\ell  = 1, \cdots, L$. Assuming all weight matrices are
full-rank, and letting $\tilde{x} :=
\i{d_0}{d_1}(x)$, \eqref{tau-up} gives
\begin{align}
    x^{(1)} &= \sigma(W_1 x + b_1) = \sigma(\tilde{W}_{1}(\tilde{x}) + b_1) =
    \tilde{W}_{1} \tau_{\tilde{W}_{1}, b_1} (\tilde{x}) + b_1
\end{align}
for a suitable $\tilde{W}_{1} \in GL(d_1)$, which implies
\begin{align}
    x^{(2)} &= \sigma(W_2 x^{(1)} + b_2) \nonumber \\
            &= \sigma(W_2 \tilde{W}_{1} \tau_{\tilde{W}_{1}, b_1}(\tilde{x}) + W_{2}b_1 +
            b_2)  \nonumber \\
            &= \sigma (\tilde{W}^{(2)} \tilde{x}^{(\tau, 1)} + b^{(2)}) \\
            &= \tilde{W}^{(2)} \tilde{x}^{(\tau, 2)} + b^{(2)}, \nonumber
\end{align}
where we took $\tilde{W}^{(2)} = W_2 \tilde{W}_{1}$ instead of $W^{(2)} = W_2 W_1$ as in
the $d_0 \geq d_1$ case.
Then a modified version of \eqref{taulast} holds,
\begin{align}
    \label{newtaulast}
    x^{(L)} = \tilde{W}^{(L)} \tilde{x}^{(\tau , L-1)} + b^{(L)},
\end{align}
where 
\begin{align}
    \tilde{x}^{(\tau, L-1)} = \tau_{\tilde{W}^{(L-1)}, b^{(L-1)}} \circ \cdots \circ
    \tau_{\tilde{W}^{(1)}, b^{(1)}} (\, \tilde{x} \,),
\end{align}
for
\begin{align}
    \label{newW(ell)}
    \tilde{W}^{(\ell)} = W_{\ell } \cdots W_2 \tilde{W}_{1}, \quad \ell = 1, \cdots, L.
\end{align}
Finally, \eqref{newtaulast} implies that \eqref{net1} is equivalent to
\begin{alignat}{2}
    \mathbb{R}^{d_0} \stackrel{\iota}{\hookrightarrow}
    \mathbb{R}^{d_1} \stackrel{\tau_{\underline{\theta}^{(1)}}}{\longrightarrow} \,\,
     &\mathbb{R}^{d_1}
    \stackrel{\tau_{\underline{\theta}^{(2)}}}{\longrightarrow} \cdots
    \stackrel{\tau_{\underline{\theta}^{(L-1)}}}{\longrightarrow} \mathbb{R}^{d_1}
    \stackrel{\underline{\theta}^{(L)}}{\longrightarrow} \mathbb{R}^{d_{L}},
\end{alignat}
where $\underline{\theta}^{(\ell)} = (\tilde{W}^{(\ell)}, b^{(\ell )})$, for $\ell = 1,
\cdots, L$.

\begin{remark}
    While Lemma \ref{increase-lemma} could be applied at any given layer of a neural
    network, and (with extra work) could be used multiple times so that a version of
    \eqref{newtaulast} would hold for arbitrary feedforward networks, we restrict
    ourselves to the case $d_0 \leq d_1 \geq d_2 \geq \cdots \geq d_{L}$ in this paper
    because it is natural, and sufficient for dealing with the toy models considered here.
    Moreover, it encompasses the architectures needed to
    approximate continuous functions on a compact set
    \cite{haninsellke17}.
\end{remark}

\vspace{1em}
\section{Application to binary classification}
\label{sec:main}

Consider a data set $\mathcal{X} = \sum_{j=1}^{2} \mathcal{X}_{j}$ 
split into two disjoint classes.  
For $\mathcal{X} \subset
\mathbb{R}^{d_0}$, a classifying neural
network with $L$ layers is a map
\begin{align}
    f_{\underline{\theta}}: \mathbb{R}^{d_0} \to \mathbb{R}^{2}
\end{align}
parametrized by a vector of parameters $\underline{\theta}
= (
\underline{\theta}_{1}, \cdots, \underline{\theta}_{L}$), where $\underline{\theta}_{\ell }
= (W_{\ell }, b_{\ell }) \in \mathbb{R}^{d_{\ell } \times d_{\ell-1}} \times
\mathbb{R}^{d_{\ell }}$ for $\ell  = 1, \cdots, L$.

In the following results, we show how one truncation map can make
the data linearly separable for two toy models of data that is not linearly
separable. Then the results from
\cite{chenewald24hyperplanes} can be applied to construct a neural network which
interpolates the data.

We start with the case where the data can be separated by a single polyhedral cone, as in
Figure \ref{fig:crescent}.

\begin{prop} \label{prop-crescent}
    Suppose $\mathcal{X} = \mathcal{X}_{1} \cup \mathcal{X}_{2} \subset \mathbb{R}^{n}$,
    and there exist $p \in \mathbb{R}^{n}$ and
    $\{v_{i}\}_{i=1}^{m} \subset
    \mathbb{R}^{n}$ a set of linearly independent vectors, such that $\mathcal{X}_{1}
    \subset \pyr_{-}(p,\underline{v})$ and $\mathcal{X}_{2} \subset (\pyr_{-})^{c}$, for
    $n \geq m$, and $\underline{v} = (v_{i})_{i=1}^{m}$.

    Then there exist $W \in \mathbb{R}^{m \times n}$ and $b\in \mathbb{R}^{m}$ such
    that the truncated data $\tau_{W,b}(\mathcal{X})$ is linearly separable.
\end{prop}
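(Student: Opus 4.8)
The plan is to apply the converse direction of Lemma \ref{new-cone-lemma} to convert the geometric hypothesis into an explicit truncation map, and then to produce a single linear functional that reduces the problem to separating an apex point from an open cone.

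First I would feed the given base point $p$ and the linearly independent set $\underline{v} = (v_i)_{i=1}^{m}$ into the converse construction of Lemma \ref{new-cone-lemma}: take $\pen{W} := [v_1 \cdots v_m]$, which is injective, so $W$ is surjective, and set $b := -Wp$. With these choices, for any $x = p + \tilde{v} + \sum_{i=1}^{m} a_i v_i$ with $\tilde{v} \in (\mathrm{span}(v_i))^{\perp} = \ker W$, Lemma \ref{new-cone-lemma} gives $\tau_{W,b}(x) = \pen{W}Wp + \sum_{i; a_i > 0} a_i v_i$. Note that the kernel component $\tilde{v}$ never survives $\pen{W}W$, so $\tau_{W,b}$ is insensitive to it; this is what lets the argument run uniformly in the $m < n$ and $m = n$ cases.

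Next I would read the two hypotheses off in coordinates. Membership $x \in \pyr_{-}(p,\underline{v})$ is exactly the condition $a_i \leq 0$ for all $i$, whereas $x \in (\pyr_{-})^{c}$ means at least one $a_i > 0$. Hence every point of $\mathcal{X}_{1} \subset \pyr_{-}$ has an empty positive-index set, so $\tau_{W,b}$ collapses all of $\mathcal{X}_{1}$ to the single apex point $p_0 := \pen{W}Wp$, while every point of $\mathcal{X}_{2}$ is sent to $p_0 + w$ with $w = \sum_{i; a_i>0} a_i v_i$ a nonzero, nonnegative combination of the $v_i$.

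The separation step is then immediate via the functional $\phi := \mathbf{1}^{\top} W$ on $\mathbb{R}^{n}$, i.e. the sum of the $m$ coordinates of $Wy$. Using $W\pen{W} = \Id$ and $W v_i = e_i^{m}$, one computes $\phi(\tau_{W,b}(x)) = \mathbf{1}^{\top}Wp + \sum_{i; a_i > 0} a_i$, which equals the constant $\mathbf{1}^{\top}Wp$ on $\mathcal{X}_{1}$ and is strictly larger on $\mathcal{X}_{2}$; a hyperplane $\{\,y : \phi(y) = \mathbf{1}^{\top}Wp + \delta\,\}$ for small $\delta > 0$ then separates the truncated classes. The only point requiring care — and the one real obstacle — is strictness of the separation: to obtain a genuine separating hyperplane rather than merely weak separation I need $\inf_{x \in \mathcal{X}_{2}} \sum_{i; a_i(x)>0} a_i(x) > 0$, which is automatic for finite data sets such as the toy models considered here, but would otherwise demand a margin or compactness assumption.
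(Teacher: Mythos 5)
Your proof is correct, and its skeleton matches the paper's: both feed the given $p$ and $\underline{v}$ into the converse construction of Lemma \ref{new-cone-lemma} (so $\pen{W} = [v_1 \cdots v_m]$, $b = -Wp$), and both observe that $\mathcal{X}_1$ collapses to the apex $\pen{W}Wp$ while each point of $\mathcal{X}_2$ lands at the apex plus a nonzero nonnegative combination $\sum_{i; a_i>0} a_i v_i$. Where you genuinely diverge is the separation step. The paper builds the separating hyperplane in \emph{primal} form: it parametrizes an $(m-1)$-dimensional affine subspace through $\pen{W}Wp + \frac{\tilde a}{2}v_1$ with directions $v_1 - v_j$, $j=2,\ldots,m$ (equation \eqref{hyper}), verifies side-by-side membership by rewriting the coordinates of each truncated point, and then extends to a hyperplane of $\mathbb{R}^n$ by adding $\ker W$. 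You instead give the \emph{dual} description, exhibiting the hyperplane as a level set of the functional $\phi = \mathbf{1}^{\top} W$; the computation $\phi(\tau_{W,b}(x)) = \mathbf{1}^{\top}Wp + \sum_{i; a_i(x)>0} a_i(x)$ makes the strict inequality on $\mathcal{X}_2$ immediate, and since $\phi$ vanishes on $\ker W$ no extension step is needed. In fact, taking $\delta = \tilde a/2$, your level set \emph{is} the paper's hyperplane $H + \ker W$ (both are $(n-1)$-dimensional affine subspaces and one contains the other), so the two arguments construct the same object; yours is the leaner verification, while the paper's explicit parametrization \eqref{hyper} is the form that gets reused downstream, in Corollary \ref{prop:kernel-trick} and Theorem \ref{thm:main}, to build the subsequent layers. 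Finally, your caveat about strictness is well placed but is not a gap relative to the paper: the paper makes the same implicit assumption by writing $\tilde a$ as a $\min$ over $\mathcal{X}_2$, which presumes the minimum exists and is positive (automatic for finite data); you are simply more explicit that infinite data would require $\inf_{x \in \mathcal{X}_2} \sum_{i; a_i(x)>0} a_i(x) > 0$.
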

\begin{proof}
    Let $W$ and $b$ be defined as in Lemma \ref{new-cone-lemma}.
    For all $x \in \mathcal{X}_{1}$  we have
    \begin{equation}
        \tau_{W,b}(x) = \pen{W}W p.
    \end{equation}
    For any $x\in \mathcal{X}_{2}$, written as 
    \begin{align}
        x = p + \tilde{x} + \sum_{i=1}^{m} a_{i}(x) v_{i},
    \end{align}
    for some $\tilde{x}\in \ker W$, there exists $i\in \left\{1, \cdots, n \right\}$
    such that $a_{i}(x) > 0$, and 
    \begin{equation}
        \tau_{W,b}(x) = \pen{W}Wp + \sum_{i;a_{i}(x)>0} a_{i}(x)\, v_{i}.
    \end{equation}
   
    We can now construct a hyperplane which separates the two classes. Let
    \begin{equation}
        \tilde{a} := \min\left\{ a_{i}(x): x \in \mathcal{X}_{2}, i=1, \cdots, m,
        \text{ and } a_{i}(x) > 0 \right\},
    \end{equation}
    and consider the $(m-1)$-dimensional affine subspace
    \begin{align} 
        \label{hyper}
        H(p, (v_{i})_{i=1}^{m}, \tilde{a}) := \left\{ \pen{W}W p + \frac{\tilde{a}}{2} v_1
        + \sum_{j=2}^{m} h_{j} (v_1 - v_{j}) : h_{j} \in \mathbb{R}, j=2, \cdots, m
        \right\} \subset \mathbb{R}^{n},
    \end{align}
    which we will denote simply by $H$ in the remainder of this proof.
    Note that $H, \tau_{W,b}(\mathcal{X}_{1})$ and $\tau_{W,b}(\mathcal{X}_{2}) \subset
    (\ker W)^{\perp}$.  Then for $x\in \mathcal{X}_{1}$,
    \begin{align}
        \tau_{W,b}(x) &= \pen{W}W p \nonumber \\ 
                      &= \left(\pen{W}W p + \frac{\tilde{a}}{2}v_1 \right) -
                      \frac{\tilde{a}}{2}v_1 ,
    \end{align}
    and for $x\in \mathcal{X}_{2}$,
    \begin{align}
        \tau_{W,b}(x) &= \pen{W}W p + \sum_{a_{i}(x) >0} a_{i}(x) v_{i} \nonumber \\
                      &= \pen{W}W p + \sum_{i=1}^{m} \sigma(a_{i}(x)) v_{i} \nonumber \\
                      &= \pen{W}W p + \frac{\tilde{a}}{2}v_1 + \left(\sigma(a_1(x)) -
                      \frac{\tilde{a}}{2}\right)v_1 + \sum_{i>1}\sigma(a_{i}(x)) v_{i} \\
                      &= \left( \pen{W}W p + \frac{\tilde{a}}{2}v_1 +
                          \sum_{i>1}\sigma(a_{i}(x)) (v_{i} - v_1) \right) +
                          \underbrace{\left(\sum_{i=1}^{m} \sigma(a_{i}(x)) -
                          \frac{\tilde{a}}{2}\right)}_{>0} v_1, \nonumber
    \end{align}
    from which we see that $\tau_{W,b}(\mathcal{X}_{1})$ and
    $\tau_{W,b}(\mathcal{X}_{2})$ are on opposite sides of the hyperplane $H$ inside
    $(\ker W)^{\perp}$. To extend $H$ to a hyperplane in $\mathbb{R}^{n}$ which separates
    the truncated classes, it suffices to take $H + \ker W$.
\end{proof}

In this case, the data can be interpolated by applying 
a second truncation map and an affine map,
\begin{align}
    \mathbb{R}^{d_0}
    \stackrel{\tau_{\underline{\theta}^{(1)}}}{\longrightarrow} \,\, \mathbb{R}^{d_0}
    \stackrel{\tau_{\underline{\theta}^{(2)}}}{\longrightarrow} \mathbb{R}^{d_0}
    \stackrel{\underline{\theta}^{(3)}}{\longrightarrow} \mathbb{R}^{2},
\end{align}
which is equivalent to a neural network with the following architecture:
\begin{align}
    \label{net-1}
    \mathbb{R}^{d_0} \stackrel{\underline{\theta}_{1}}{\longrightarrow}
    \mathbb{R}^{d_1} \stackrel{\underline{\theta}_{2}}{\longrightarrow}
    \mathbb{R}^{d_2} \stackrel{\underline{\theta}_{3}}{\longrightarrow} \mathbb{R}^{2},
\end{align}
where $d_0 \geq d_1 \geq d_2 \geq 2$.

\vspace{1ex}
Consider now data which is concentric, as in Figure \ref{fig:concentric}.
It is easy to see in the 2-dimensional case that no polyhedral cone in $\mathbb{R}^{2}$
will satisfy the assumptions of Proposition \ref{prop-crescent}. By adding one extra
dimension and using a cone in $\mathbb{R}^{3}$, the data can be projected in a way that
makes it linearly separable. 

\begin{coro} 
    \label{prop:kernel-trick}
    Suppose $\mathcal{X} = \mathcal{X}_{1} \cup \mathcal{X}_{2} \subset \mathbb{R}^{n}$,
    and there exists an $n$-simplex $S$ such that $\mathcal{X}_{1} \subset S$ and
    $\mathcal{X}_{2} \subset S^{c}$. Then 
    there exist $W \in GL(n+1)$ and $b\in \mathbb{R}^{n+1}$ such that 
    $\tau_{W,b} \circ \i{n}{n+1}(\mathcal{X})$ is linearly separable.
\end{coro}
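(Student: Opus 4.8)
The plan is to reduce the statement to Proposition \ref{prop-crescent} applied one dimension up. After the inclusion $\i{n}{n+1}$, the data lives on the hyperplane $\{x_{n+1}=0\}\subset\mathbb{R}^{n+1}$, so it suffices to build a polyhedral cone $\pyr_-(p,\underline{v})$ in $\mathbb{R}^{n+1}$ whose slice by this hyperplane is exactly the lifted simplex $\i{n}{n+1}(S)$. Then the lifted inner class sits inside $\pyr_-$ and the lifted outer class in $(\pyr_-)^c$, which is precisely the hypothesis of Proposition \ref{prop-crescent} in ambient dimension $n+1$ with $m=n+1$ (the invertible, $\ker W = 0$ case); that proposition then produces a hyperplane separating $\tau_{W,b}\circ\i{n}{n+1}(\mathcal{X})$.

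For the construction, write the vertices of the $n$-simplex as $s_1,\ldots,s_{n+1}\in\mathbb{R}^n$, which are affinely independent. Fix any $h>0$ and any base point $p_0\in\mathbb{R}^n$, and set
\[
    p := (p_0, -h), \qquad v_i := (p_0 - s_i, -h) \in \mathbb{R}^{n+1}, \quad i = 1, \ldots, n+1.
\]
A direct computation shows that a point $p+\sum_i a_i v_i$ with all $a_i\le 0$ lies on $\{x_{n+1}=0\}$ exactly when $\sum_i a_i = -1$, and in that case its first $n$ coordinates equal $\sum_i(-a_i)s_i$, a convex combination of the vertices (the term $p_0$ cancels). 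Hence $\pyr_-(p,\underline{v})\cap\{x_{n+1}=0\}=\i{n}{n+1}(S)$, and conversely no point of $\{x_{n+1}=0\}$ outside $\i{n}{n+1}(S)$ can lie in $\pyr_-$.

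The remaining steps are verifications. First, the $v_i$ are linearly independent: if $\sum_i\lambda_i v_i=0$, the last coordinate forces $\sum_i\lambda_i=0$, whence the first $n$ coordinates give $\sum_i\lambda_i s_i=0$, and affine independence of the vertices forces all $\lambda_i=0$. By the converse part of Lemma \ref{new-cone-lemma} there are $W\in GL(n+1)$ and $b\in\mathbb{R}^{n+1}$ realizing $\tau_{W,b}$ with cone data $(p,\underline{v})$. Since $\mathcal{X}_1\subset S$ and $\mathcal{X}_2\subset S^c$, the slice identity gives $\i{n}{n+1}(\mathcal{X}_1)\subset\pyr_-$ and $\i{n}{n+1}(\mathcal{X}_2)\subset(\pyr_-)^c$. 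Applying Proposition \ref{prop-crescent} to the lifted data in $\mathbb{R}^{n+1}$ then concludes the argument.

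The main obstacle is the construction itself: choosing the apex $p$ and edge vectors $v_i$ so that the cone's cross-section reproduces the simplex \emph{and} the $v_i$ remain linearly independent. This is exactly where nondegeneracy of the $n$-simplex (affine independence of its $n+1$ vertices) enters, and it is what guarantees $W\in GL(n+1)$. Once the cone is in hand, everything reduces to the crescent case of Proposition \ref{prop-crescent}. Conceptually, this lifting is the piecewise-linear analogue of the feature map $\phi(x)=(x,\norm{x}^2)$: it converts a bounded (concentric) region into one side of a cone in one extra dimension, where a single truncation map collapses the inner class to the apex and leaves the outer class linearly separable from it.
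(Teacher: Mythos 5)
Your proof is correct and takes essentially the same route as the paper: lift the data via $\iota$, place the apex $p$ a positive distance from the hyperplane containing the lifted simplex, take edge vectors $v_{i} = p - \iota(s_{i})$ so that $\pyr_{-}$ is the cone over the simplex with apex $p$, and then invoke Proposition \ref{prop-crescent}. Your explicit coordinate verification of the slice identity $\pyr_{-}\cap\{x_{n+1}=0\}=\iota(S)$ and of the linear independence of the $v_{i}$ simply makes precise what the paper asserts ``by construction.''
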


\begin{proof}
    First, include the data into $\mathbb{R}^{n+1}$ with $\iota := \i{n}{n+1}$.
    Next, let $s_1, \cdots, s_{n+1}$ be the vertices for $S' = \iota(S)$, 
    and
    choose $p \in \mathbb{R}^{n+1}$ a positive distance away from the hyperplane in
    $\mathbb{R}^{n+1}$ spanned by $S'$. 
    Let 
    \begin{align}
        v_{i} := p - s_{i}, \quad i=1, \cdots, n+1.
    \end{align}
    Then a polyhedral cone $\mathcal{S}_{-}$ can be constructed from $p$ and
    $\left(v_{i}\right)_{i=1}^{n+1}$ for which the assumptions of Proposition
    \ref{prop-crescent} are satisfied: The simplex $S'$ is entirely contained in 
    $\mathcal{S}_{-}$ by construction, so that
    \begin{align}
        \iota(\mathcal{X}_{1}) \subset S' \subset \mathcal{S}_{-}.
    \end{align}
    Moreover, $\iota(\mathcal{X}_{2}) \subset (\pyr_{-})^{c}$, since
    $\iota(\mathcal{X}_{2}) \subset (S')^{c}\cap span(S')$ and $\pyr_{-}\cap span(S') =
    S'$.
    
    Proposition \ref{prop-crescent} now applies, and
    a hyperplane like \eqref{hyper} can be found that separates $\tau_{W,
    b}(\iota (\mathcal{X}_{1}))$ and $\tau_{W,b}(\iota(\mathcal{X}_{2}))$, for 
    $W \in GL(n+1)$ and $b\in \mathbb{R}^{n+1}$ given by 
    Lemma \ref{new-cone-lemma}.
\end{proof}

Putting together Lemma \ref{increase-lemma} and Corollary \ref{prop:kernel-trick}
gives our main result:

\begin{thm}
    \label{thm:main}
    Suppose $\mathcal{X} = \mathcal{X}_{1} \cup \mathcal{X}_{2} \subset \mathbb{R}^{d_0}$,
    and there exists a $d_0$-simplex $S$ such that $\mathcal{X}_{1} \subset S$ and
    $\mathcal{X}_{2} \subset S^{c}$. Moreover, let the classes be labeled by two linearly
    independent vectors $y_1, y_2 \in \mathbb{R}^{2}$.
    Then this data can be interpolated
    by a feedforward ReLU neural
    network \eqref{neuralnet} with $2$ hidden layers,
    \begin{align}
        \label{mainnet}
        \mathbb{R}^{d_0} \stackrel{\underline{\theta}_{1}}{\longrightarrow}
        \mathbb{R}^{d_1} \stackrel{\underline{\theta}_{2}}{\longrightarrow}
        \mathbb{R}^{d_2} \stackrel{\underline{\theta}_{3}}{\longrightarrow}
        \mathbb{R}^{2},
    \end{align}
    where $d_1 = d_0 + 1$ and $d_1 \geq d_2 \geq 2$. 
\end{thm}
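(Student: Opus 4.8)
The plan is to build the network \eqref{mainnet} by layering three ingredients: the separating construction of Corollary \ref{prop:kernel-trick}, the width-increase realization of Lemma \ref{increase-lemma}, and the interpolation result for linearly separable data from \cite{chenewald24hyperplanes}. First I would fix $d_1 = d_0 + 1$ and apply Corollary \ref{prop:kernel-trick} verbatim: since $\mathcal{X}_1 \subset S$ and $\mathcal{X}_2 \subset S^c$ for a $d_0$-simplex $S$, there are $W \in GL(d_0+1)$ and $b \in \mathbb{R}^{d_0+1}$ so that the lifted, truncated data $\tau_{W,b}(\iota(\mathcal{X}))$ is linearly separable in $\mathbb{R}^{d_1}$, where $\iota = \i{d_0}{d_1}$ and $\mathcal{X}_1$ in fact collapses to the single point $\pen{W}Wp$.

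Next I would absorb the inclusion $\iota$ together with this first truncation into a single ReLU layer via Lemma \ref{increase-lemma}. Because $W$ is already invertible of size $d_1 \times d_1$, I take $\tilde{W}_1 := W$ and $b_1 := b$, so the first-layer weight $W_1 := W\iota \in \mathbb{R}^{d_1 \times d_0}$ is injective and \eqref{tau-up} gives $x^{(1)} = \sigma(W_1 x + b_1) = \tilde{W}_1 \tau_{W,b}(\iota(x)) + b_1$. Hence the effective first truncation map of the rewritten network is exactly the $\tau_{W,b}$ furnished by the Corollary, and after this one layer the image of $\mathcal{X}$ is linearly separable in $\mathbb{R}^{d_1}$.

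Linear separability in $\mathbb{R}^{d_1}$ is precisely the hypothesis under which the zero-loss classifier construction of \cite{chenewald24hyperplanes} applies; equivalently, as in the remark following Proposition \ref{prop-crescent}, one further truncation map followed by an affine read-out suffices. I would invoke that construction to produce $\underline{\theta}_2 = (W_2, b_2)$ and $\underline{\theta}_3 = (W_3, b_3)$ realizing a second truncation $\mathbb{R}^{d_1} \to \mathbb{R}^{d_2}$ and an affine map $\mathbb{R}^{d_2} \to \mathbb{R}^2$ that sends each class to its label $y_j$, with $d_1 \geq d_2 \geq 2$. Composing the three layers $\underline{\theta}_1, \underline{\theta}_2, \underline{\theta}_3$ then yields a network of the form \eqref{mainnet} that interpolates $\mathcal{X}$.

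I expect the main obstacle to be the bookkeeping at the interfaces rather than any single estimate. One must check that the construction imported from \cite{chenewald24hyperplanes} accepts data in which one class has been collapsed to a single point, that its full-rank and monotone-width requirements ($d_1 \geq d_2 \geq 2$) are compatible with our dimensions, and that it can be arranged to output exactly the prescribed linearly independent labels $y_1, y_2$. The only genuinely structural point, confirming that the effective first truncation of the rewritten network coincides with the $\tau_{W,b}$ of Corollary \ref{prop:kernel-trick}, is already settled by the choice $\tilde{W}_1 = W$ through Lemma \ref{increase-lemma}, so once the dimensions are aligned no further geometric difficulty arises.
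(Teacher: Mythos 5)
Your proposal follows essentially the same route as the paper's own proof: Corollary \ref{prop:kernel-trick} supplies the inclusion $\iota$ and the first truncation making the lifted data linearly separable, Lemma \ref{increase-lemma} (with $\tilde{W}_1 = W$, $W_1 = W\iota$) realizes this as an actual ReLU layer, and the construction of \cite{chenewald24hyperplanes} applied to the separated data yields the remaining two layers. The only difference is that the paper writes out explicitly the conversion from cumulative to layer-wise parameters ($W_2 = W^{(2)}\tilde{W}_1^{-1}$, $W_3 = W^{(3)}\pen{W^{(2)}}$, and the corresponding bias formulas), which you correctly identify as bookkeeping but leave implicit.
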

\begin{proof}
    We will construct the following sequence of maps to interpolate the data:
    \begin{align}
        \label{truncationnet}
        \mathbb{R}^{d_0} \stackrel{\iota}{\hookrightarrow}
        \mathbb{R}^{d_0 + 1}
        \stackrel{\tau_{\underline{\theta}^{(1)}}}{\longrightarrow} \,\, \mathbb{R}^{d_0 +
        1} \stackrel{\tau_{\underline{\theta}^{(2)}}}{\longrightarrow} \mathbb{R}^{d_0 + 1}
        \stackrel{\underline{\theta}^{(3)}}{\longrightarrow} \mathbb{R}^{2}.
    \end{align}
    Corollary \ref{prop:kernel-trick} gives the inclusion map $\iota:= \i{d_0}{d_0+1}$,
    and the parameters
    $\tilde{W}_{1} \in GL(d_0+1)$ and $b^{(1)} \in \mathbb{R}^{d_0+1}$ for
    $\underline{\theta}^{(1)}=(\tilde{W}_{1},b^{(1)})$, such that
    $\tau_{\underline{\theta}^{(1)}}(\iota(\mathcal{X}))$ is linearly separable.  
    Then the cumulative parameters $\underline{\theta}^{(2)} = (W^{(2)}, \, b^{(2)})$
    defining the second truncation map and $\underline{\theta}^{(3)} = (W^{(3)}, \,
    b^{(3)})$ defining the affine map in \eqref{truncationnet} can be found using the
    hyperplane given by \eqref{hyper} for Corollary \ref{prop:kernel-trick} and
    the construction in \cite{chenewald24hyperplanes}. 

    Finally, 
    the parameters $(\underline{\theta}_{i})_{i=1}^{3}$ defining
    \eqref{mainnet} are given according to
    Lemma \ref{increase-lemma} and  
    \eqref{newW(ell)}: the weight matrices are
    \begin{align}
        W_1 &= \tilde{W}_{1} \i{d_0}{d_0+1}, \nonumber  \\
        W_{2} &= W^{(2)} \tilde{W}_{1}^{-1}, \\
        W_{3} &= W^{(3)} \pen{W^{(2)}}, \nonumber 
    \end{align}
    the bias vectors are
    \begin{align}
        b_{1} &= b^{(1)}, \nonumber \\
        b_{2} &= b^{(2)} - W_{2}b^{(1)}, \\
        b_{3} &= b^{(3)} - W_{3}b^{(2)}, \nonumber 
    \end{align}
    and $\underline{\theta}_{i} = (W_{i}, b_{i})$, $i=1, 2, 3$.
\end{proof}

\vspace{1em}
\noindent
{\bf Acknowledgments:}
The author thanks Thomas Chen for helpful comments, and J. Elisenda Grigsby for 
discussions and for pointing out the 
the works by Hanin and Sellke, and Johnson.
P.M.E. was supported by NSF grant DMS-2009800. 

\bibliographystyle{alpha} 

\end{document}